\newif\ifdraft\drafttrue
\newif\ifinlineref\inlinereffalse
\newif\iffinal\finalfalse
\newif\ifextended\extendedfalse
\newif\ifdotikz\dotikzfalse
\newcommand{\comment}[1]{{\bf\color{blue}{*** #1 ***}}}
\newcommand{\comment}[1]{}
\newcommand{\finalcomment}[1]{}
\title{Reactive Policies with Planning\\ for Action Languages 
\thanks{This work has been supported by Austrian Science Fund (FWF) project W1255-N23.}}
\author{Zeynep G. Saribatur \and Thomas Eiter\\ 
Institut f\"ur Informationssysteme, Technische Universit\"at Wien\\
Favoritenstra\ss e 9-11, A-1040 Vienna, Austria\\ 
\texttt{\{zeynep,eiter\}@kr.tuwien.ac.at}
}
\def\ba{\begin{array}}
\def\ea{\end{array}}
\def\be{\begin{enumerate}}
\def\ee{\end{enumerate}}
\def\bi{\begin{itemize}}
\def\ei{\end{itemize}}
\def\beq{\begin{equation}}
\def\eeq#1{\label{#1}\end{equation}}
\def\beeq{\begin{equation*}}
\def\eeeq{\end{equation*}}
\def\eqs{\,{=}\,}
\def\leqs{\,{\leq}\,}
\def\geqs{\,{\geq}\,}
\def\ins{\,{\in}\,}
\def\nmodels{\,{\nvDash}\,}
\def\mi#1{\mathit{#1\/}}
\def\hatbf#1{\widehat{\textbf{#1\/}}}
\theoremstyle{theorem}
\newtheorem{prop}{Proposition}
\newtheorem{cor}{Corollary}
\newtheorem{thm}{Theorem}
\theoremstyle{definition}
\newtheorem{defn}{Definition}
\newtheorem{exmp}{Example}
\newcommand{\Reach}{\ensuremath{\mathit{Reach}}}
\newcommand{\Res}{\ensuremath{\mathit{Res}}}
\newcommand{\caused}[2]{\ensuremath{{{\textrm{\bf caused\ }}\mi{#1}
      {\textrm{\bf\ if\ }} \mi{#2}}}}
\begin{document}
\maketitle

\begin{abstract}
We describe a representation in a high-level transition system for policies that express a reactive behavior for the agent. We consider a target decision component that figures out what to do
next and an (online) planning capability to compute the plans needed to
reach these targets. Our representation allows one to analyze the flow
of executing the given reactive policy, and to determine whether it
works as expected. Additionally, the flexibility of the representation
opens a range of possibilities for designing behaviors.
\end{abstract}

Autonomous agents are systems that decide for themselves what to do to
satisfy their design objectives. These agents have a knowledge base
that describes their capabilities, represents facts about the world
and helps them in reasoning about their course of actions. A reactive agent
interacts with its environment. It perceives the current state of the
world through sensors, consults its memory (if there is any), reasons
about actions to take and executes them in the environment. A policy
for these agents gives guidelines to follow during their interaction
with the environment.

As autonomous systems become more common in our lives, the issue of
verifying that they behave as intended becomes more important. During
the operation of an agent, one would want to be sure that by following
the designed policy, the agent will achieve the desired results. It
would be highly costly, time consuming and sometimes even fatal to
realize at runtime that the designed policy of the agent does not
provide the expected properties.

For example, in search and rescue scenarios, an agent needs to find a missing
person 
in unknown environments. A naive approach would be to directly try to find a plan that achieves the main goal of finding the person. However, this problem easily becomes troublesome, since not knowing the environment causes the planner to consider all possible cases and find a plan that guarantees reaching the goal in all settings. Alternatively, one can describe a reactive policy for the agent that determines its course of actions according to its current knowledge, and guides the agent in the environment towards the main goal. A possible such policy could be ``always
move to the farthest unvisited point in
visible distance, until a person is found". Following this reactive
policy, the agent would traverse the environment by choosing its
actions to reach the farthest possible point from the current state,
and by reiterating the decision process after reaching a new
state. The agent may also remember the locations it has been in and
gain information (e.g. obstacle locations) through its sensors on the
way. Verifying beforehand whether or not the designed policy of the agent satisfies the desired goal (e.g. can the agent always
find the person?), in all possible instances of the environment is
nontrivial.

Action languages \cite{gelfondaction98} provide a useful framework on defining actions and reasoning about them, by modeling dynamic systems as transition systems. Their declarative property helps in describing the system in an understandable, concise language, and they also address the problems encountered when reasoning about actions. 
By design, these languages are made to be decidable, which ensures reliable descriptions of dynamic systems. As these languages are closely related with classical logic and answer set programming (ASP) \cite{whatasp08,lifschitz08}, they can be translated into logic programs and queried for computation. The programs produced by such translations can yield sound and complete answers to such queries. There have been various works on action languages \cite{gelfondaction98,gelfondlifschitz93,c98} and their reasoning systems \cite{giunchiglia2004nct,coala10}, 
with underlying mechanisms 
that rely on SAT and ASP solvers. 

The shortage of representations that are capable of modeling reactive policies prevents one from verifying such policies using action languages as above before putting them into use. The necessity of such a verification capability motivates us to address this issue. We thus aim for a general model that allows for verifying the reactive behavior of agents in environments with different types in terms of observability and determinism. In that model, we want to use the representation power of the transition systems described by action languages and combine components that are efficient for describing reactivity.

Towards this aim, we consider in this paper agents with a reactive behavior that decide their
course of actions by determining targets to achieve during their interaction with the
environment. Such agents come with an (online) planning
capability that computes plans to reach the targets.
This method matches the observe-think-act cycle of Kowalski and Sadri \shortcite{DBLP:journals/amai/KowalskiS99}, but involves a planner that considers targets. The flexibility
in the two components - target development and external planning -
allow for a range of possibilities for designing behaviors. For
example, one can use HEX \cite{hex05} to describe a program that
determines a target given the current state of an agent, finds the
respective plan and the execution schedule. ACTHEX programs \cite{acthex13}, in particular,
provide the tools to define such reactive behaviors as it allows for
iterative evaluation of the logic programs and the ability to observe
the outcomes of executing the actions in the environment. Specifically, we make the following contributions:

\begin{enumerate}[(1)]
\item We introduce a novel framework for describing the semantics of a policy that follows a reactive behavior, by integrating components of target establishment and online planning. The purpose of this work is not synthesis, but to lay foundations for verification of behaviors of (human-designed) reactive policies. The outsourced planning might also lend itself for modular, hierarchic planning, where macro actions (expressed as targets) are
turned into a plan of micro actions. Furthermore, outsourced planning
may also be exploited to abstract from correct sub-behaviors (e.g.
going always to the farthest point).
\item We relate this to action languages and discuss possibilities for policy formulation. In particular, we consider the action language $\mathcal{C}$ \cite{c98} to illustrate an application.
\end{enumerate}

The remainder of this paper is organized as follows. After some
preliminaries, we present a running example and  then the general framework
for modeling policies with planning. After that, we consider the
relation to action languages, and as a particular application we
consider (a fragment of) the action language $\mathcal{C}$. We briefly
discuss some related work and conclude with some issues for ongoing
and future work.

\section{Preliminaries}

\begin{defn}
A \emph{transition system} $\mathcal{T}$ is defined as $\mathcal{T}=\langle S,S_0,\mathcal{A},\Phi\rangle$ where
\bi
\item $S$ is the set of states.
\item $S_0 \subseteq S$ is the set of possible initial states.
\item $\mathcal{A}$ is the set of possible actions.
\item $\Phi: S \times \mathcal{A} \rightarrow 2^{S} $ is the transition function, returns the set of possible successor states after applying a possible action in the current state.
\ei

For any states $s,s' \in S$, we say that there is a \emph{trajectory} between $s$ and $s'$, denoted by $s \rightarrow^\sigma s'$ for some action sequence $\sigma = a_1,\dots,a_n$ where $n \geq 0$, if there exist $s_0,\dots,s_n \in S$ such that $s=s_0,s'=s_n$ and $s_{i+1} \in \Phi(s_i,a_{i+1})$ for all $0 \leq i < n$.
\label{defn:TS}
\end{defn}

We will refer to this transition system as the \emph{original} transition system. The constituents $S$ and $\mathcal{A}$ are assumed to be finite in the rest of the paper. Note that, this transition system represents fully observable settings. Large environments cause high number of possibilities for states, which cause the transition systems to be large. Especially, if the environment is nondeterministic, the resulting transition system contains high amount of transitions between states, since one needs to consider all possible outcomes of executing an action.

\subsection{Action Languages}

Action languages describe a particular type of transition systems that are based on action signatures.
An \emph{action signature} consists of a set \textbf{V} of value names, a set \textbf{F} of fluent names and a set \textbf{A} of action names. Any \emph{fluent} has a \emph{value} in any \emph{state of the world}. 
 
A transition system of an action signature $\langle \textbf{V},\textbf{F},\textbf{A} \rangle$ is similar to Defn.~\ref{defn:TS}, where $\mathcal{A}=\textbf{A}$ and $\Phi$ corresponds to the relation $R \subseteq S \times \textbf{A} \times S$. In addition, we have a value function $V: \textbf{F} \times S \rightarrow \textbf{V}$, where $V(P,s)$ shows the \emph{value of $P$} in state $s$. A transition system can be thought as a labeled directed graph, where a state $s$ is represented by a vertex labeled with the function $P \rightarrow V(P,s)$, that gives the value of the fluents. Every triple $\langle s, a, s'\rangle \in R$ is represented by an edge leading from a state $s$ to a state $s'$ and labeled by $a$.

An action $a$ is \emph{executable} at a state $s$, if there is at least one state $s'$ such that $\langle s, a, s'\rangle \in R$ and $a$ is \emph{deterministic} if there is at most one such state. 
Concurrent execution of actions can be defined by considering transitions in the form $\langle s,A,s'\rangle$ with a set $A \subseteq \textbf{A}$ of actions, where each action $a \ins A$ is executable at $s$.

An action signature $\langle \textbf{V},\textbf{F},\textbf{A} \rangle$ is \emph{propositional} if its value names are truth values: $\textbf{V} {=} \{\mathsf{f,t}\}$. 
In this work, we 
confine to propositional action signatures.

The transition system allows one to answer queries about the program. For example, one can find a plan to reach a goal state from an initial state, by searching for a path between the vertices that represent these states in the transition system. One can express properties about the paths of the transition system by using an action query language.

\section{Running Example: Search Scenarios}

Consider a memoryless agent that can sense horizontally and
vertically, in an unknown $n{\times} n$ grid cell environment with
obstacles, where a missing person needs to be found. Suppose we are
given the action description of the agent with a policy of ``always
going to the farthest reachable point in visible distance (until a
person is found)". Following this reactive policy, the agent chooses its course of actions to reach the farthest
reachable point, referred as \emph{target}, from its current location with respect to its current knowledge about the environment. After executing the plan and reaching a state that satisfies the target, the decision process is reiterated and a new target, hence a new course of actions, is determined.

Given such a policy, one would want to check whether or not the agent can always find the person, in all instances of the environment. 
Note that we assume that the obstacles are placed in a way that the person is always reachable.

\begin{figure}[h!]
\centering
\subfigure[]{\includegraphics[scale=0.8]{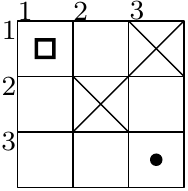}}\qquad
\subfigure[]{\includegraphics[scale=0.8]{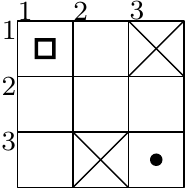}}\qquad
\subfigure[]{\includegraphics[scale=0.8]{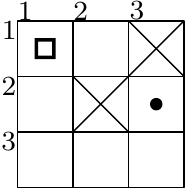}}\qquad
\caption{Possible instances of a search scenario}
\label{fig:simple}
\end{figure}

Figure~\ref{fig:simple} shows some possible instances for $n{=}3$,
where the square in a cell represents the agent and 
the dot represents the missing person. The course of actions determined by the policy in all the instances is to move to (3,1), which is the farthest reachable point, i.e. target. It can be seen that (a) is an
instance where the person can be found with the given policy, while in
(b) the agent goes in a loop and can't find the person, since after reaching (3,1) it will decide to move to (1,1) again. In (c), after
reaching (3,1) following its policy, the agent has two
possible directions to choose, since there are two farthest points. It
can either move to (3,3), which would result in seeing the person, or it
can move back to (1,3), which would mean that there is
a possibility for the agent to go in a loop.

Notice that our aim is different from \emph{finding a policy} (i.e. global plan) that satisfies certain properties (i.e. goals) in an unknown environment. 
On the contrary, we assume that we are given a representation of a system with a certain policy, and we want to check what it is capable (or incapable) of.

\section{Modeling Policies in Transition Systems}

A reactive policy is described to reach some main goal, by guiding the agent through its interaction with the environment. This guidance can be done by determining the course of actions to bring about targets from the current situation, via externally computed plans. A transition system that models such policies should represent the flow of executing the policy, which is the agent's actual trajectory in the environment following the policy. This would allow for verifying whether execution of a policy results in reaching the desired main goal, i.e. the policy works.

We define such a transition system by clustering the states
into groups depending on a \emph{profile}. A profile is determined by evaluating a set of formulas
over a state that 
informally yield attribute (respectively feature) values; states 
with the same attribute values are clustered into one. The choice of formulas for determining profiles depends on the given policy or the environment one is considering. Then, the
transitions between these clusters are defined according to the policy.
The newly defined transitions
are able to show the evaluation of the policy by a higher level action from
one state to the next state. This next state satisfies the target determined by a \emph{target component}, and the higher level action
corresponds to the execution of an externally computed plan. 

Having such a classification on states and defining higher level
transitions between the states can help in reducing the state space or
the number of transitions when compared to the original transition
system. Furthermore, it aids
in abstraction and allows one to emulate a modular hierarchic
approach, in which a higher level (macro) action, expressed by a
target, is realized in terms of a sequence of (micro) actions that
is compiled by the external planner, which may use different
ways (planning on the fly, resorting to scripts etc.)

\subsection{State profiles according to the policy}

We now describe a classification of states, which helps to omit parts of the state that are irrelevant with respect to the environment or the policy. This classification is done by determining profiles, and clustering the states accordingly.

\begin{exmp}
\begin{figure}[t]
\centering
\includegraphics[scale=0.8]{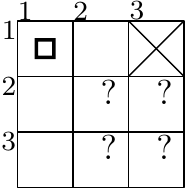}
\caption{A cluster of states}
\label{fig:simpleex}
\end{figure}

Remember the possible instances from Figure~\ref{fig:simple} in the
running example. Due to partial observability, the agent is unable to
distinguish the states that it is in, and the unobservable parts are
irrelevant to the policy. Now assume that there are fluents that hold
the information of the agent's location, the locations of the
obstacles and the reachable points. One can determine a profile of the
form ``the agent is at (1,1), sees an obstacle at (1,3), and is able
to reach to points at (1,2), (2,1), (3,1)" by not considering the
remaining part of the environment that the agent can not observe. The
states that have this profile can be clustered in one group as in
Figure~\ref{fig:simpleex}, where the cells with question marks
demonstrate that they are not observable by the agent.
\end{exmp}

For partially observable environments, the notion of indistinguishable states can be used in the classification of states. The states that provide the same observations for the agent are considered as having the same profile.
However, in fully observable environments, observability won't help in reducing the state space. One needs to find other notions to determine profiles.

We consider a \emph{classification function}, $h : S \rightarrow \Omega_h$, where $\Omega_h$ is the set of possible state clusters. This is a general notion applicable to fully and partially observable cases. 

\begin{defn}
An \emph{equalized state} relative to the classification function $h$ is a state $\hat{s} \in \Omega_h$. The term \emph{equalized} comes from the fact that the states in the same classification are considered as the same, i.e. equal.

To talk about a state $s$ that is clustered into an equalized state $\hat{s}$, we use the notation $s \ins \hat{s}$, where we identify $\hat{s}$ with its pre-image (i.e. the set of states that are mapped to $\hat{s}$ according to $h$).
\end{defn}

Different from the work by Son and Baral \shortcite{baral01} where
they consider a ``combined-state" which consists of the real state of
the world and the states that the agent thinks it may be in, we
consider a version where we combine the real states into one state if
they provide the same classification (or observation, in case of
partial observability) for the
agent. 
The equalization of states allows for omitting the details that are irrelevant to the behavior of the agent.

\subsection{Transition systems according to the policy}

We now define the notion of a transition system that is able to represent the evaluation of the policy on the state clusters.

\begin{defn}
An \emph{equalized (higher level) transition system $\mathcal{T}_h$}, with respect to the classification function $h$, is defined as $\mathcal{T}_h= \langle \widehat{S}, \widehat{S}_0,G_B, \mathcal{B}, \Phi_{\mathcal{B}} \rangle$, where
\bi
\item $\widehat{S}$ is the finite set of equalized states; 
\item $\widehat{S}_0\subseteq \widehat{S}$ is the finite set of initial equalized states, where $\hat{s} \in \widehat{S}_0$ if there is some $s_i \in \hat{s}$ such that $s_i \in S_0$ holds;

\item $G_B$ is the finite set of possible \emph{targets} relative to the behavior, where a target can be satisfied by more than one equalized state;
\item $\mathcal{B} : \widehat{S} \rightarrow 2^{G_B}$, is the \emph{target function} that returns the possible targets to achieve from the current equalized state, according to the policy;
\item $\Phi_{\mathcal{B}}: \widehat{S} \rightarrow 2^{\widehat{S}} $ is the transition function according to the policy, referred to as the \emph{policy execution function}, returns the possible resulting equalized states after applying the policy in the current equalized state.
\ei

The target function gets the equalized state as input and produces the possible targets to achieve. These targets may be expressed as formulas over the states (in particular, of states that are represented by fluents or state variables), or in some other representation. 
A target can be considered as a {subgoal condition} to hold at the follow-up state, depending on the current equalized state. The aim would be to intend to reach a state that satisfies the conditions of the target, without paying attention to the steps taken in between. That's where the policy execution function comes into the picture.

The formal description of the policy execution function is as follows:
\beeq \ba {rl}
\Phi_B(\hat{s}) = \{ \hat{s}' ~|~ \hspace{-1em}&\hat{s}' \in \Res(\hat{s},\sigma), \\
& \sigma \in \Reach(\hat{s},g_B),g_B \in \mathcal{B}(\hat{s})\},
\ea \eeeq
where $\Reach$ is an outsourced function that returns a plan $\sigma = \langle a_1,\dots,a_n\rangle, n\geq 0$ needed to reach a state that meets the conditions $g_B$ from the current equalized state $\hat{s}$:
$$\Reach(\hat{s},g_B)\subseteq \{\sigma ~|~ \forall \hat{s}' \in \Res(\hat{s},\sigma) : \hat{s}' \models g_B\}$$
where $\hat{s} {\models} g_B \Leftrightarrow \forall s \ins \hat{s} : s {\models} g_B,$
and $\Res$ gives the resulting states of executing a sequence of actions at a state $\hat{s}$:
\beeq \ba {ll}
\Res(\hat{s}, \langle a_1,\dots,a_{n\geq 1} \rangle)&= \\
& \hspace{-1.3in} \left\{ \ba {lcl}
\bigcup_{\hat{s}'\in \hat{\Phi}(\hat{s},a_1)} \Res(\hat{s}',\langle a_2,\dots,a_{n} \rangle) & \mbox{ } & \hat{\Phi}(\hat{s},a_1){\neq} \emptyset\\
\{\hat{s}_{err}\} & \mbox{ } & \hat{\Phi}(\hat{s},a_1){=} \emptyset
\ea \right.\\
\\
\Res(\hat{s},\langle \rangle)&= \{\hat{s}\} 
\ea \eeeq 
where the state $\hat{s}_{err}$ is an artifact state that does not satisfy any of the targets, and
$$\hat{\Phi}(\hat{s},a)=\{ \hat{s}' ~|~ \exists s' \in \hat{s}' ~\exists s \in \hat{s}: s' \in \Phi(s,a)\}.$$
\label{defn:main}

\vspace{-.75\baselineskip}
\end{defn}
\begin{figure}[t]
\centering
\includegraphics[scale=0.7]{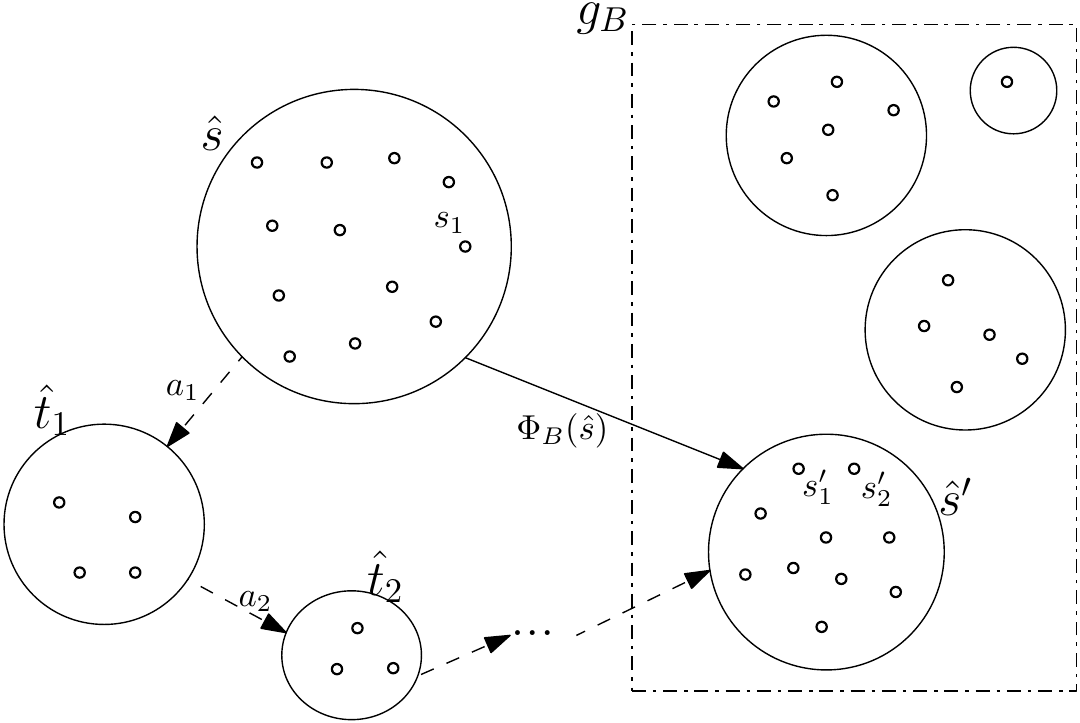}
\caption{A transition in the equalized transition system}
\label{fig:trans}
\end{figure}

Figure~\ref{fig:trans} demonstrates a transition in the equalized transition system. The equalized states may contain more than one state that has the same profile. Depending on the current state, $\hat{s}$, the policy chooses the next target, $g_B$, that should be satisfied. There may be more than one equalized state satisfying the same target. The policy execution function $\Phi_B(\hat{s})$ finds a transition into one of these equalized states, $\hat{s}'$, that is reachable from the current equalized state. The transition $\Phi_B$ is considered as a big jump between states, where the actions taken and the states passed in between are omitted.

Notice that we assume that the outsourced $\Reach$ function is able to return conformant plans that guarantee to reach a state that satisfies the determined targets. In particular, $\sigma$ may also contain only one action. For practical reasons, we consider $\Reach$ to be able to return a subset of all conformant plans. The maximal possible $\Reach$, where we have equality, is denoted with $\Reach_0$.

Consider the case of uncertainty, where the agent requires to do some action, e.g. $\mathit{checkDoor}$, in order to get further information about its state. One can define the target function to return as target a dummy fluent to ensure that the action is made, e.g. $\mathit{doorIsChecked}$, and given this target, the $\Reach$ function can return the desired action as the plan. The nondeterminism or partial observability of the environment is modeled through the set of possible successor states returned by $\Res$.

The generic definition of the equalized transition system allows for the possibility of representing well-known concepts like purely reactive systems or universal planning \cite{cimattiobdd98}. To represent reactive systems, one can describe a policy of ``pick some action". This way one can model reactive systems that do not do reasoning, but immediately react to the environment with an action. As for the exactly opposite case, which is finding a plan that guarantees reaching the goal, one can choose the target as the main goal. Then, the $Reach$ would have the difficult task of finding a universal plan or a conformant plan that reaches the main goal. If however, one is aware of such a plan, then it is possible to mimic the plan by modifying the targets $G_B$ and the target function $\mathcal{B}$ in a way that at each point in time the next action in the plan is returned by $Reach$, and the corresponding transition is made. For that, one needs to record information in the states and keep track of the targets.

As the function $\Reach$ is
outsourced, we rely on an implementation that returns conformant plans
for achieving transitions in the equalized transition systems. This
naturally raises the issue of whether a given such implementation is
suitable, and leads to the question of soundness (only correct plans
are output) and completeness (some plan will be output, if one
exists). We next assess how expensive it is to test this, under some
assumptions about the representation and computational properties of
(equalized) transition systems, which will then also be used for
assessing the cost of policy checking.

\paragraph{Assumptions} We have certain assumptions on the
representation of the system. We assume that given a state $s \in S$
which is implicitly given using a binary encoding, the cost of
evaluating the classification $h(s)$, the (original) transition
$\Phi(s,a)$ for some action $a$, and recognizing the initial state,
say with $\Phi_{init}(s)$, is polynomial. The cost could also be in
NP, if projective (i.e. existentially quantified) variables are allowed. Furthermore, we assume that
the size of the representation of a ``target'' in $G_B$ is polynomial in size of the state, so that given a string, one can check in polynomial
time if it is a correct target description
$g_B$. 
This test can also be relaxed to be in NP by allowing projective variables. 

Given these assumptions, we have the following two
results. These results show the cost of checking whether an
implementation of $\Reach$ that we have at hand is sound (delivers
correct plans) and in case does not skip plans (is complete); we
assume here that testing whether $\sigma \in \Reach(\hat{s},g_B)$ is
feasible in $\Pi^p_2$ (this is the cost of verifying conformant plans,
and we may assume that $\Reach$ is no worse than a naive guess and
check algorithm).

\begin{thm}[soundness of $\Reach$]
Let $\mathcal{T}_h= \langle \widehat{S}, \widehat{S}_0,$ $G_B,$ $\mathcal{B}, \Phi_{\mathcal{B}} \rangle$ be a transition system with respect to a classification function $h$. The problem of checking whether every transition found by the policy execution function $\Phi_{\mathcal{B}}$ induced by a given implementation $\Reach$ is correct is in $\Pi_3^p$.
\end{thm}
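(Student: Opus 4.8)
The plan is to prove the dual statement: checking soundness lies in $\Pi_3^p$ exactly because its complement --- the existence of an \emph{incorrect} transition --- lies in $\Sigma_3^p$. A transition is the witnessed triple $(\hat{s}, g_B, \sigma)$ together with a resulting state $\hat{s}' \ins \Res(\hat{s},\sigma)$, and it is incorrect iff $g_B \ins \mathcal{B}(\hat{s})$, $\sigma \ins \Reach(\hat{s},g_B)$, $\hat{s}' \ins \Res(\hat{s},\sigma)$, yet $\hat{s}' \nmodels g_B$. Soundness of the given $\Reach$ is precisely the non-existence of such a witness, so it suffices to place the search for an incorrect transition in $\Sigma_3^p$.

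First I would fix succinct representations so that every guessed object has polynomial size. By the assumptions, concrete states carry a binary encoding and $h$ is polynomial-time, so an equalized state $\hat{s} \ins \widehat{S}$ can be named by any representative $s$ with $h(s) {=} \hat{s}$; targets $g_B$ and the plan $\sigma {=} \langle a_1,\dots,a_n\rangle$ (of polynomially bounded length, as for the conformant-plan check underlying the $\Pi_2^p$ assumption) are likewise polynomial. The outer existential layer then guesses a representative $s_0$ of $\hat{s}$, the target $g_B$, the plan $\sigma$, a representative of the offending $\hat{s}'$, and, as witnesses for $\hat{s}' \ins \Res(\hat{s},\sigma)$, a trajectory $\hat{s}{=}\hat{s}_0,\dots,\hat{s}_n{=}\hat{s}'$ with, for each step, concrete states $s_i \ins \hat{s}_i$ and $s_{i+1}' \ins \hat{s}_{i+1}$ attesting $s_{i+1}' \ins \Phi(s_i,a_{i+1})$, and finally a concrete $s \ins \hat{s}'$ with $s \nmodels g_B$ (recalling $\hat{s}' {\models} g_B \Leftrightarrow \forall s \ins \hat{s}': s {\models} g_B$, so its failure is itself existential). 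The tests $g_B \ins \mathcal{B}(\hat{s})$, the $h(\cdot)$ evaluations, $s_{i+1}' \ins \Phi(s_i,a_{i+1})$, and $s \nmodels g_B$ are all polynomial, hence absorbed into this first layer.

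Two pieces still need alternation. The membership $\sigma \ins \Reach(\hat{s},g_B)$ is $\Pi_2^p$ by assumption, contributing a $\forall\exists$ block after the outer guess. The membership $\hat{s}' \ins \Res(\hat{s},\sigma)$ is unproblematic in its successful branch (handled by the guessed trajectory plus polynomial checks), but its $\hat{s}_{err}$ branch requires certifying that some prefix ends in a state where the action is not executable, i.e. $\hat{\Phi}(\hat{s}_k,a_{k+1}) {=} \emptyset$, which unfolds to $\forall s \ins \hat{s}_k: \Phi(s,a_{k+1}) {=} \emptyset$ --- a coNP ($\forall$) test. I would then collect the quantifiers: the outer $\exists$, followed by the $\forall$ of the $\Reach$-test merged with the $\forall$ of the emptiness test, followed by the inner $\exists$ of the $\Reach$-test. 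Merging the two universal blocks and pulling the surviving existential inward yields the prefix $\exists\forall\exists$, placing the search for an incorrect transition in $\Sigma_3^p$ and therefore soundness in $\Pi_3^p$.

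The main obstacle is the treatment of $\Res$. Beyond guessing the intermediate equalized states succinctly (via representatives and per-step concrete witnesses), the delicate point is the $\hat{s}_{err}$ case: checking non-executability is a universal statement over a cluster and so introduces an extra $\forall$. The argument only works because this $\forall$ folds together with the universal layer of the $\Pi_2^p$ $\Reach$-test without creating a new alternation, so that the whole complement stays within $\Sigma_3^p$ rather than spilling into $\Sigma_4^p$.
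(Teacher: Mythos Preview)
Your proposal is correct and follows essentially the same route as the paper: both arguments show the complement lies in $\Sigma_3^p$ by guessing $\hat{s}$, $g_B$, and a plan $\sigma$, then verifying $\sigma \in \Reach(\hat{s},g_B)$ (a $\Pi_2^p$ test) together with the failure of $\sigma$ as a conformant plan (a $\Sigma_2^p$ test). The only stylistic difference is that the paper treats plan-incorrectness as a black-box $\Sigma_2^p$ oracle call (conditions (i) and (ii) on trajectories), whereas you unpack this into explicit trajectory witnesses and a separate handling of the $\hat{s}_{err}$ branch, then merge the resulting universal blocks; this is the same argument at a finer level of detail.
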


\begin{proof}[Proof (Sketch)]
  According to the definition of the policy execution function, every transition
  from a state $\hat{s}$ to some state $\hat{s}'$ corresponds to some
  plan $\sigma$ returned by $\Reach(\hat{s},g_B)$. Therefore, first
  one needs to check whether each plan $\sigma=\langle
  a_1,a_2,\dots,a_n\rangle$ returned by $\Reach$ given some $\hat{s}$
  and $g_B$ is correct. For that we need to check two conditions on
  the corresponding trajectories of the plan: \be[(i)]
\item for all partial trajectories $\hat{s}_0,\hat{s}_1,\dots,\hat{s}_{i-1}$ it holds that for the upcoming action $a_i$ from the plan $\sigma$, $\hat{\Phi}(\hat{s}_{i-1},a_i) \neq \emptyset$ (i.e. the action is applicable)
\item for all trajectories $\hat{s}_0,\hat{s}_1,\dots,\hat{s}_{n}$, $\hat{s}_n \models g_B$.
\ee
Checking whether these conditions hold is in $\Pi_2^p$. 

Thus, to decide whether for some state $\hat{s}$ and target $g_B$
the function $\Phi_B(\hat{s},g_B)$ does not work correctly, we can
guess $\hat{s}$ (resp. $s\ins \hat{s}$), $g_B$ and a plan $\sigma$
and verify that $\sigma \ins \Reach(\hat{s},g_B)$ and that $\sigma$ is
 not correct. As the verification is doable with an oracle for
 $\Sigma^p_2$ in polynomial time, a counterexample for correctness
 can be found in $\Sigma^p_3$;  thus the problem is in $\Pi^p_3$.
\end{proof}

The complexity is lower, if output checking of $\Reach$ has lower complexity (in particular, it drops to $\Pi^p_2$ if output checking is in co-NP).

The result for soundness of $\Reach$ is complemented with another result for completeness with respect to short (polynomial size) conformant plans that are returned by $\Reach$.
   
\begin{thm}[completeness of $\Reach$] %
Let $\mathcal{T}_h= \langle \widehat{S},$ $\widehat{S}_0,$ $G_B,$ $\mathcal{B}, \Phi_{\mathcal{B}} \rangle$ be a transition system with respect to a classification function $h$. Deciding whether for a given implementation $\Reach$, $\Phi_B$ fulfills $\hat{s}' \in \Phi_B(\hat{s})$ whenever a short conformant plan from $\hat{s}$ to $\hat{s}'$ exists in $T_h$, is in $\Pi^p_4$.
\end{thm}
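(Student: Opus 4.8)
The plan is to mirror the soundness proof but to carefully track the extra quantifier alternation that completeness introduces. Completeness is a statement of the form ``for all $\hat{s},\hat{s}'$, if a short conformant plan from $\hat{s}$ to $\hat{s}'$ exists, then $\hat{s}'\ins\Phi_B(\hat{s})$'', so I would decide its complement and complement the resulting class. A counterexample to completeness is a pair $\hat{s},\hat{s}'$ (given through representatives $s\ins\hat{s}$ and $s'\ins\hat{s}'$) together with a target $g_B$ and a short plan $\sigma$ witnessing a short conformant plan from $\hat{s}$ to $\hat{s}'$, such that nevertheless $\hat{s}'\notin\Phi_B(\hat{s})$.

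First I would set up the guessing phase for the counterexample: guess the representatives of $\hat{s}$ and $\hat{s}'$, a target $g_B$, the polynomial-size plan $\sigma$, and a trajectory of equalized states realizing $\hat{s}'\ins\Res(\hat{s},\sigma)$. By the assumptions, checking $g_B\ins\mathcal{B}(\hat{s})$ is in NP, and verifying the guessed trajectory step by step (each step needing witnesses $s,s'$ with $s'\ins\Phi(s,a_i)$ and the correct classification) is in NP as well. The conformance of $\sigma$ for $g_B$ is exactly conditions (i)--(ii) of the soundness proof, hence in $\Pi^p_2$.

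The crux is the clause $\hat{s}'\notin\Phi_B(\hat{s})$. Its negation $\hat{s}'\ins\Phi_B(\hat{s})$ unfolds, when restricted to short plans, to: there exist a target $g_B'\ins\mathcal{B}(\hat{s})$ and a short plan $\sigma'$ with $\sigma'\ins\Reach(\hat{s},g_B')$ and $\hat{s}'\ins\Res(\hat{s},\sigma')$. Guessing $g_B'$, $\sigma'$ and the witnessing trajectory is a polynomial guess, the $\Res$- and $\mathcal{B}$-checks are in NP, and the membership test $\sigma'\ins\Reach(\hat{s},g_B')$ is in $\Pi^p_2$ by assumption; an existential guess over a $\Pi^p_2$ verification yields $\Sigma^p_3$. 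Hence $\hat{s}'\ins\Phi_B(\hat{s})$ is in $\Sigma^p_3$, and its negation $\hat{s}'\notin\Phi_B(\hat{s})$ is in $\Pi^p_3$.

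Finally I would assemble the bound. The counterexample is an existential (polynomial) guess over a verification that conjoins the $\Pi^p_2$ conformance test with the $\Pi^p_3$ test for $\hat{s}'\notin\Phi_B(\hat{s})$; the dominant part is $\Pi^p_3$, so a counterexample can be found in $\Sigma^p_4$, and therefore completeness itself is in $\Pi^p_4$. I expect the main obstacle to be precisely the clause $\hat{s}'\notin\Phi_B(\hat{s})$: unlike soundness, where one only needs to exhibit a single incorrect plan returned by $\Reach$ (a $\Sigma^p_2$ verification, giving a $\Sigma^p_3$ counterexample and a $\Pi^p_3$ problem), completeness must certify that \emph{no} admissible plan produced by $\Reach$ reaches $\hat{s}'$, which is itself a $\Pi^p_3$ property and thereby raises the whole analysis by one level of the polynomial hierarchy. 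A secondary subtlety worth flagging is the restriction to short plans, which is what keeps the inner guesses polynomial and prevents the membership $\hat{s}'\ins\Phi_B(\hat{s})$ from escaping the hierarchy altogether.
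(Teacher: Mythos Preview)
Your proposal is correct and follows essentially the same route as the paper: guess $\hat{s},\hat{s}'$ (via representatives) together with a short plan $\sigma$, verify with a $\Pi^p_2$ oracle that $\sigma$ is a valid conformant plan from $\hat{s}$ to $\hat{s}'$, verify with a $\Pi^p_3$ oracle that $\hat{s}'\notin\Phi_B(\hat{s})$, and conclude $\Sigma^p_4$ for the counterexample, hence $\Pi^p_4$ for the problem. Your write-up is in fact more explicit than the paper's sketch (you spell out the $\Res$-trajectory guess and the $\Sigma^p_3$ analysis of $\hat{s}'\in\Phi_B(\hat{s})$); the only superfluous ingredient is the guessed target $g_B$ in the first phase, since the paper treats ``conformant plan from $\hat{s}$ to $\hat{s}'$'' without tying it to a particular target, but this does not affect the bound.
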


\begin{proof}[Proof (Sketch)]
For a counterexample, we can guess some $\hat{s}$   
and $\hat{s}'$ (resp.\ $s\ins \hat{s}$,  $s'\ins \hat{s}'$) and some 
short plan $\sigma$ and verify that (i) $\sigma$ is a valid conformant
plan in $\mathcal{T}_h$ to reach $\hat{s}'$ from $\hat{s}$, and (ii) that
a target $g_B$ exists such that $\Reach(\hat{s},g_B)$ produces some
output.  We can verify (i) using a $\Pi^p_2$ oracle to check that
$\sigma$ is a conformant plan, and we can verify (ii) using a
$\Pi^p_3$ oracle (for all guesses of targets $g_B$ and short plans
$\sigma'$, either $g_B$ is not a target for $\hat{s}$ or $\sigma'$
is not produced by $\Reach(\hat{s},g_B)$). This establishes
membership in $\Pi^p_4$. 
\end{proof}

As in the case of soundness, the complexity drops if checking the
output of $\Reach$ is lower (in particular, to $\Pi^p_3$ if the
output checking is in co-NP).

We also restrict the plans $\sigma$ that are returned by $\Reach(\hat{s},g_B)$ to have polynomial size. This constraint would not allow for exponentially long conformant plans (even if they exist). Thus, the agent is forced under this restriction to develop targets that it can reach in polynomially many steps, and then to go on from these targets. Informally, this does not limit the capability of the agent in general. The ``long" conformant plans can be split into short plans with a modified policy and by encoding specific targets into the states. 

We denote the main goal that the reactive policy is aiming for by
$g_\infty$. Our aim is to have the capability to check whether
following the policy always results in reaching
some state that satisfies the main goal. That is, for each run,
i.e. sequence $\hat{s}_0,\hat{s}_1, \ldots$ such that $\hat{s}_0\in
\widehat{S}_0$ and $\hat{s}_{i+1} \in \Phi_B(\hat{s}_i)$, for all
$i\geq 0$, there is some $j\geq 0$ such that $\hat{s}_j \models
g_\infty$.
(The behavior could be easily modified to stop or to loop in any
state $\hat{s}$ that satisfies the goal.)
This way we can say whether the policy works or not.  Under the
assumptions from above, we obtain the following 

\begin{thm}
The problem of determining that the policy works is in PSPACE.
\end{thm}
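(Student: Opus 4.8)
The plan is to reduce ``the policy works'' to a reachability / model-checking question on the equalized transition graph and to exploit closure of PSPACE under complement. The policy works precisely when the property ``on every run, eventually $g_\infty$'' holds; equivalently, the policy \emph{fails} exactly when there is a maximal run from some initial state that avoids $g_\infty$ forever. Since $\widehat{S}$ is finite (though of size up to $2^{\mathrm{poly}}$ under the binary encoding), such a bad run exists iff, staying inside the set of states $\hat{s}\not\models g_\infty$, one can start at a non-goal initial state and either (a) reach a non-goal state $\hat{s}$ with $\Phi_B(\hat{s})=\emptyset$ (a dead end, giving a finite maximal bad run), or (b) keep moving for more than $|\widehat{S}|$ steps, which by the pigeonhole principle forces a repeated state and hence a cycle of non-goal states (an infinite bad run). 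I would thus decide the complementary problem ``the policy fails'' and then complement.

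First I would record that all the relevant vertex and edge predicates are computable in polynomial space on the polynomial-size state encodings. Checking $\hat{s}\in\widehat{S}_0$ and $\hat{s}\not\models g_\infty$ lies in PH by the representation assumptions; and a single transition test $\hat{s}'\in\Phi_B(\hat{s})$ amounts to guessing a poly-size target $g_B\in\mathcal{B}(\hat{s})$ and a poly-size plan $\sigma$, then verifying $\sigma\in\Reach(\hat{s},g_B)$ (in $\Pi_2^p$) together with $\hat{s}'\in\Res(\hat{s},\sigma)$ (in NP, by guessing the trajectory through $\hat{\Phi}$). Hence the test sits in $\Sigma_3^p$ and its negation $\Phi_B(\hat{s})=\emptyset$ in $\Pi_3^p$. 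Since the entire polynomial hierarchy is contained in PSPACE, each of these queries runs in polynomial space.

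Next I would give a nondeterministic polynomial-space procedure for ``the policy fails''. Guess a non-goal initial state $\hat{s}_0$ and initialize a binary counter $k:=0$; then repeatedly: if $\Phi_B(\hat{s})=\emptyset$, accept (case (a)); otherwise guess a successor $\hat{s}'\in\Phi_B(\hat{s})$ with $\hat{s}'\not\models g_\infty$, increment $k$, accept if $k$ exceeds $|\widehat{S}|$ (case (b)), and continue from $\hat{s}'$. At every moment the machine stores only the current equalized state and the counter, both of polynomial bit-length (the counter ranges over $\{0,\dots,2^{\mathrm{poly}}\}$), plus the polynomial workspace needed to evaluate the PSPACE predicates; hence the procedure runs in NPSPACE. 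Its correctness is exactly the characterization above: an accepting computation witnesses a bad dead end, or, via the pigeonhole bound, a reachable non-goal cycle; and conversely any bad maximal run can be followed for at most $|\widehat{S}|+1$ steps to produce an accepting computation. By Savitch's theorem NPSPACE $=$ PSPACE, so ``the policy fails'' is in PSPACE, and since PSPACE is closed under complement, ``the policy works'' is in PSPACE as claimed.

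The main obstacle is that the equalized state space is exponential, so the graph cannot be built explicitly; the argument hinges on (i) keeping all stored objects---states, targets, plans, and the step counter---of polynomial size, and (ii) observing that although a single edge test already sits fairly high in the polynomial hierarchy (up to $\Sigma_3^p$), this is harmless because PH $\subseteq$ PSPACE, so the reachability-with-cycle-detection search still fits in polynomial space. The most delicate point to get right is the characterization of failure itself: matching the run semantics ($\hat{s}_{i+1}\in\Phi_B(\hat{s}_i)$ for all $i$) to the cycle-versus-dead-end dichotomy, and fixing the counter threshold to $|\widehat{S}|$ so that exceeding it is provably equivalent to the existence of a reachable non-goal cycle.
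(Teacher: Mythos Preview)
Your proposal is correct and follows essentially the same line as the paper's proof sketch: both argue via the complement, bounding a counterexample run to (at most) exponential length by the number of equalized states, guessing it nondeterministically in polynomial space, and invoking NPSPACE $=$ PSPACE (Savitch). Your treatment is more explicit than the paper's on two points---the dead-end case $\Phi_B(\hat{s})=\emptyset$ and the $\Sigma_3^p$ cost of a single edge test---but these are refinements of, not departures from, the paper's argument.
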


\begin{proof}[Proof (Sketch)]
One needs to look at all runs 
$\hat{s}_0,\hat{s}_1, \ldots$ from every initial state $\hat{s}_0$ in
the equalized transition system and check whether each such run has
some state $\hat{s}_j$ that satisfies the main goal $g_\infty$. Given
that states have a representation in terms of fluent or state variables,
there are at most exponentially many different states. Thus to find a
counterexample, a run of at most exponential length in which
$g_\infty$ is not satisfied is sufficient. Such a run can be
nondeterministically built in polynomial space; as NPSPACE $=$ PSPACE, the
result follows. 
\end{proof}

Note that in this formulation, we have tacitly
assumed that the main goal can be established in the original system,
thus at least some trajectory from some initial state to a state
fulfilling the goal exists (this can be checked in PSPACE as well). In
a more refined version, we could define the working of a policy
relative to the fact that some abstract plan would exist that makes
$g_\infty$ true; naturally, thus may impact the complexity of the policy
checking. 

Above, we have been considering arbitrary states, targets and transitions in
the equalized transition system. In fact, for the particular
behavior, only the states that can be encountered in runs really
matter; these are the reachable states defined as follows.

\begin{defn}
A state $\hat{s}$ is \emph{reachable} from an initial state in the equalized transition system if and only if $s \in \mathcal{R}_i$ for some $i \in \mathbb{N}$ where $\mathcal{R}_i$ is defined as follows.
\beeq \ba {ll}
\mathcal{R}_0 &= \widehat{S}_0\\
\mathcal{R}_{i+1} &= \bigcup_{\hat{s}\in\mathcal{R}_i} \Phi_B(\hat{s})\\
&\dots\\
\mathcal{R}^\infty &= \bigcup_{i\geq 0} \mathcal{R}_i.
\ea
\eeeq
\end{defn}

Under the assumptions that apply to the previous results, we can state the following.
\begin{thm}
The problem of determining whether a state in an equalized transition system is reachable is in PSPACE.
\end{thm}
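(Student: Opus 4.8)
The plan is to cast the question as reachability in an implicitly represented directed graph whose vertices are the equalized states and whose edges are the single-step transitions given by $\Phi_B$, and then to close the argument with Savitch's theorem (NPSPACE $=$ PSPACE), exactly in the spirit of the preceding policy-working result. First I would observe that, since every equalized state is represented in terms of polynomially many fluent or state variables, there are at most exponentially many distinct states in $\widehat{S}$. Hence, if the query state $\hat{s}$ is reachable at all, it is reachable along a path $\hat{s}_0,\hat{s}_1,\dots,\hat{s}_m$ with $\hat{s}_0\ins\widehat{S}_0$, $\hat{s}_m\eqs\hat{s}$, and $m$ bounded by $|\widehat{S}|$, i.e.\ at most exponential; a step counter for $m$ therefore needs only polynomially many bits.

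Next I would give a nondeterministic procedure that builds such a path one transition at a time, keeping in memory at any moment only the current state $\hat{s}_i$ (of polynomial size) and the polynomially bounded counter. It begins by guessing an initial state $\hat{s}_0$ and verifying $\hat{s}_0\ins\widehat{S}_0$ (guess some $s\ins\hat{s}_0$ and test $\Phi_{init}(s)$). At each step it guesses the successor $\hat{s}_{i+1}$ together with a target $g_B$ and a plan $\sigma$ of polynomial size, and checks the single-transition condition $\hat{s}_{i+1}\ins\Phi_B(\hat{s}_i)$, namely that $g_B\ins\mathcal{B}(\hat{s}_i)$, that $\sigma\ins\Reach(\hat{s}_i,g_B)$, and that $\hat{s}_{i+1}\ins\Res(\hat{s}_i,\sigma)$. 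The last membership is itself decided by guessing a trajectory that follows the actions of $\sigma$ and verifying each $\hat{\Phi}$ step; since $\sigma$ has polynomial length and each $\hat{\Phi}$ (hence each $\Phi$) step is polynomial (or in NP under projective variables), this test lies within the polynomial hierarchy, while $\sigma\ins\Reach(\hat{s}_i,g_B)$ is decidable in $\Pi^p_2$ by assumption. The procedure accepts as soon as $\hat{s}_i\eqs\hat{s}$ and rejects once the counter exceeds $|\widehat{S}|$.

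All of the per-step verifications lie in the polynomial hierarchy and are therefore subroutines computable in polynomial space; crucially, the space they consume is reclaimed after each step, so that the only persistent storage is the current state and the counter. The overall procedure thus runs in nondeterministic polynomial space, and applying Savitch's theorem (NPSPACE $=$ PSPACE) yields the claimed membership.

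The main obstacle, as in the policy-working result, is to keep the space polynomial despite both the exponential length of candidate paths and the comparatively expensive per-transition test. I expect the delicate points to be (i) arguing that the path length can be capped at $|\widehat{S}|$, so the counter is of polynomial width, and (ii) making precise that the transition test $\hat{s}_{i+1}\ins\Phi_B(\hat{s}_i)$ --- which conceals a $\Pi^p_2$ call to $\Reach$ together with a recursive evaluation of $\Res$ --- stays inside PSPACE and can be rerun from scratch at each step without accumulating space. Once these are settled, the reduction to implicit-graph reachability plus Savitch closes the argument.
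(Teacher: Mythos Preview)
Your proposal is correct and follows precisely the approach the paper intends: the paper does not give a separate proof for this theorem but states it ``under the assumptions that apply to the previous results,'' i.e., it is meant to be read as an immediate variant of the proof sketch of the preceding PSPACE result (exponentially many states, nondeterministically build a bounded-length run in polynomial space, apply NPSPACE $=$ PSPACE). Your write-up is in fact more detailed than anything in the paper, spelling out the per-transition test for $\hat{s}_{i+1}\in\Phi_B(\hat{s}_i)$ and why it stays within PSPACE, which is exactly what the paper leaves implicit.
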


The notions of soundness and completeness of an outsourced planning
function $\Reach$ could be restricted to reachable states; however,
this, would not change the cost of testing these properties in general
(assuming that $\hat{s}\in \mathcal{R}$ is decidable with sufficiently
low complexity).

\subsection{Constraining equalization}

The definition of $\hat{\Phi}$ allows for certain transitions between
equalized states that don't have corresponding concrete transitions in
the original transition system. However, the aim of defining such an
equalized transition system is not to introduce new features, but to
keep the structure of the original transition system and discard the
unnecessary parts with respect to the policy. Therefore, one needs to
give further restrictions on the transitions of the equalized
transition system, in order to obtain the main objective.

Let us consider the following condition
\beq
\hat{s}' \in \hat{\Phi}(\hat{s},a) \Leftrightarrow \forall s'\in\hat{s}',~ \exists s \in \hat{s}:~ s' \in \Phi(s,a) 
\eeq {eqn:cond}
This condition ensures that a transition between two states $\hat{s}_1, \hat{s}_2$ in the equalized transition system represents that any state in $\hat{s}_2$ has a transition from some state in $\hat{s}_1$. 
An equalization is called \emph{proper} if condition \eqref{eqn:cond} is satisfied.

\begin{thm}
Let $\mathcal{T}_h{=} \langle \widehat{S}, \widehat{S}_0,G_B, \mathcal{B}, \Phi_{\mathcal{B}} \rangle$ be a transition system with respect to a classification function $h$. Let $\hat{\Phi}$ be the transition function that the policy execution function $\Phi_{\mathcal{B}}$ is based on. The problem of checking whether $\hat{\Phi}$ is proper is in $\Pi_2^p$.
\end{thm}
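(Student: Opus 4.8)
The plan is to analyze the complementary problem --- deciding that $\hat{\Phi}$ is \emph{not} proper --- and to show that this complement lies in $\Sigma_2^p$; membership of the properness check in $\Pi_2^p$ then follows by complementation. First I would simplify condition~\eqref{eqn:cond}. By the definition of $\hat{\Phi}$ we have $\hat{s}'\in\hat{\Phi}(\hat{s},a)$ exactly when $\exists s'\ins\hat{s}'\,\exists s\ins\hat{s}:\,s'\in\Phi(s,a)$, so the right-to-left implication of~\eqref{eqn:cond} holds automatically as soon as $\hat{s}'$ is nonempty (which it is, being the cluster of some concrete state). Hence $\hat{\Phi}$ is proper iff for all $\hat{s}$, $a$ and $\hat{s}'$, whenever $\hat{s}'\in\hat{\Phi}(\hat{s},a)$ then $\forall s'\ins\hat{s}'\,\exists s\ins\hat{s}:\,s'\in\Phi(s,a)$. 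This collapses the biconditional to a single implication, which is the only nontrivial content of properness.

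Next I would give a guess-and-check procedure for the negation. Since each equalized state is identified with the $h$-preimage of any of its members, I would represent $\hat{s}$ and $\hat{s}'$ by guessing concrete witnesses $s_0,s'_0\ins S$ in binary, setting $\hat{s}=\{s\mid h(s)\eqs h(s_0)\}$ and $\hat{s}'=\{s\mid h(s)\eqs h(s'_0)\}$. To witness that $\hat{\Phi}$ fails to be proper I would additionally guess the action $a$, a pair $\bar{s}\ins\hat{s}$, $\bar{s}'\ins\hat{s}'$ establishing $\hat{s}'\in\hat{\Phi}(\hat{s},a)$, and a ``bad'' member $s'_{\mathit{bad}}\ins\hat{s}'$ that is intended to have no $\hat{s}$-predecessor under $a$. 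Using the assumption that evaluating $h$ and the original transition $\Phi(\cdot,a)$ is polynomial, the checks $h(\bar{s})\eqs h(s_0)$, $h(\bar{s}')\eqs h(s'_0)$, $h(s'_{\mathit{bad}})\eqs h(s'_0)$ and $\bar{s}'\in\Phi(\bar{s},a)$ are all feasible in polynomial time on the guessed objects.

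The only remaining, quantifier-heavy obligation is that $s'_{\mathit{bad}}$ genuinely has no predecessor inside $\hat{s}$, i.e.\ $\forall s:\,h(s)\eqs h(s_0)\Rightarrow s'_{\mathit{bad}}\notin\Phi(s,a)$. Because $s$ ranges over the (exponentially many) concrete states while the inner test is polynomial, this is a single universal, hence $\Pi_1^p$ (co-NP), condition. The negation of properness therefore has the form ``guess polynomially many objects, verify a polynomial-time conjunction, then check one $\Pi_1^p$ predicate'', which is exactly $\Sigma_2^p$; complementing, deciding properness is in $\Pi_2^p$.

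I expect the main obstacle to be purely the bookkeeping of the quantifier alternation, and the key is to see that in the negated statement \emph{every} outer quantifier is existential: the representatives for $\hat{s},\hat{s}'$, the witnesses $\bar{s},\bar{s}'$ arising from expanding $\hat{s}'\in\hat{\Phi}(\hat{s},a)$, and the bad member $s'_{\mathit{bad}}$ arising from negating ``$\forall s'\exists s$'' all collapse into one existential block, leaving the single trailing $\forall s$ as the only universal layer. This is what keeps the complement at the second level ($\Sigma_2^p$) rather than climbing higher. A secondary point to state explicitly is the nonemptiness of clusters, which justifies discarding the trivial direction of~\eqref{eqn:cond}; and, as with the earlier soundness and completeness results, if evaluating $h$ or $\Phi$ is only in NP (when projective variables are admitted), the bound would rise accordingly.
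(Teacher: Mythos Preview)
Your proposal is correct and follows essentially the same route as the paper: the paper's proof sketch likewise argues via the complement, saying that as a counterexample one guesses $\hat{s}$, $a$, $\hat{s}'\in\hat{\Phi}(\hat{s},a)$ and $s'\in\hat{s}'$ such that no $s\in\hat{s}$ has $s'\in\Phi(s,a)$. Your version is simply a more explicit unpacking of this---in particular, you spell out that equalized states are represented by concrete witnesses, that the membership $\hat{s}'\in\hat{\Phi}(\hat{s},a)$ itself needs existential witnesses $\bar{s},\bar{s}'$, and that the reverse direction of condition~\eqref{eqn:cond} is vacuous---points the paper leaves implicit in its one-line sketch.
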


\begin{proof}[Proof (sketch)]
As a counterexample, one needs to guess $\hat{s},a,$ $\hat{s}'\ins \hat{\Phi}(\hat{s},a)$ and $s' \ins \hat{s}'$ such that no $s \ins \hat{s}$ has $s' \ins \Phi(s,a)$.
\end{proof}

The results in Theorems 1-5
are all complemented by lower bounds for realistic realizations of
the parameters (notably, for typical action languages such as
fragments of ${\cal C}$).

The following proposition is based on the assumption that the transition function $\hat{\Phi}$ satisfies condition \eqref{eqn:cond}.

\begin{prop}[soundness]
Let $\mathcal{T}_h{=} \langle \widehat{S}, \widehat{S}_0,G_B, \mathcal{B}, \Phi_{\mathcal{B}} \rangle$ be a transition system with respect to a classification function $h$. Let $\hat{s}_1,\hat{s}_2 \in \widehat{S}$ be equalized states that are reachable from some initial states, and $\hat{s}_2 \in \Phi_B(\hat{s}_1)$. Then for any concrete state $s_2 \in \hat{s}_2$ there is a concrete state $s_1 \in \hat{s}_1$ such that $s_1 \rightarrow^\sigma s_2$ for some action sequence $\sigma$, in the original transition system.
\end{prop}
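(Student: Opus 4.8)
The plan is to unfold the definition of $\Phi_B$ into a concrete action sequence together with a chain of intermediate equalized states, and then to build the required concrete trajectory by a backward induction that is driven by the properness condition \eqref{eqn:cond}. The key observation is that properness is exactly what lets us start the construction from an \emph{arbitrary} chosen $s_2\in\hat{s}_2$.

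First I would use $\hat{s}_2 \in \Phi_B(\hat{s}_1)$ to extract a target $g_B \in \mathcal{B}(\hat{s}_1)$ and a plan $\sigma = \langle a_1,\dots,a_n\rangle \in \Reach(\hat{s}_1,g_B)$ with $\hat{s}_2 \in \Res(\hat{s}_1,\sigma)$. A routine induction on the length of $\sigma$, unfolding the recursive definition of $\Res$, then produces a sequence of equalized states $\hat{t}_0,\hat{t}_1,\dots,\hat{t}_n$ with $\hat{t}_0 = \hat{s}_1$, $\hat{t}_n = \hat{s}_2$, and $\hat{t}_i \in \hat{\Phi}(\hat{t}_{i-1},a_i)$ for each $1 \leq i \leq n$. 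Along the way I would note that none of the sets $\hat{\Phi}(\hat{t}_{i-1},a_i)$ can be empty: otherwise $\Res$ would evaluate to $\{\hat{s}_{err}\}$ at that step, but $\hat{s}_2$ is a genuine (reachable) state and hence distinct from the artifact $\hat{s}_{err}$, so it could not occur in the result.

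Next, given an arbitrary $s_2 \in \hat{s}_2 = \hat{t}_n$, I would construct concrete predecessors backwards. Set $c_n := s_2$ and iterate from $i=n$ down to $i=1$: since $\hat{t}_i \in \hat{\Phi}(\hat{t}_{i-1},a_i)$ and $c_i \in \hat{t}_i$, the forward ($\Rightarrow$) direction of \eqref{eqn:cond} yields some $c_{i-1} \in \hat{t}_{i-1}$ with $c_i \in \Phi(c_{i-1},a_i)$. This produces concrete states $c_0,c_1,\dots,c_n$ with $c_i \in \hat{t}_i$ and $c_i \in \Phi(c_{i-1},a_i)$ for every $i$. Taking $s_1 := c_0 \in \hat{t}_0 = \hat{s}_1$, the chain $c_0,\dots,c_n$ is by Definition~\ref{defn:TS} precisely a trajectory $s_1 \rightarrow^\sigma s_2$ in the original transition system, which is the desired conclusion. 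The base case $n=0$ (empty plan, forcing $\hat{s}_2 = \hat{s}_1$) is immediate: take $s_1 := s_2$ and the empty trajectory.

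The crux of the argument, and the reason the properness assumption cannot be dropped, is the backward step. With the unconstrained $\hat{\Phi}$, membership $\hat{t}_i \in \hat{\Phi}(\hat{t}_{i-1},a_i)$ only guarantees that \emph{some} state of $\hat{t}_i$ has a concrete predecessor in $\hat{t}_{i-1}$, which need not be our particular $c_i$, and the induction would stall. Condition \eqref{eqn:cond}, by demanding a predecessor for \emph{every} state of $\hat{t}_i$, is exactly what lets the construction pass through an arbitrarily chosen $s_2$ all the way back to $\hat{s}_1$. Everything else is bookkeeping: making the unfolding of $\Res$ into the chain $\hat{t}_0,\dots,\hat{t}_n$ precise and tracking that the error state is never reached.
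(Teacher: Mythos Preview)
Your proposal is correct and follows essentially the same approach as the paper's own proof: extract the plan $\sigma$ witnessing $\hat{s}_2\in\Phi_B(\hat{s}_1)$, then perform backward tracking along the actions of $\sigma$ using the properness condition~\eqref{eqn:cond} to produce a concrete predecessor at each step. Your version is simply a more carefully spelled-out variant, making the intermediate chain $\hat{t}_0,\dots,\hat{t}_n$ from the recursion of $\Res$ explicit and noting why $\hat{s}_{err}$ cannot appear (indeed, one can also argue this directly from the fact that any $\sigma\in\Reach(\hat{s}_1,g_B)$ forces every element of $\Res(\hat{s}_1,\sigma)$ to satisfy $g_B$, which $\hat{s}_{err}$ never does).
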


\begin{proof}
  For equalized states $\hat{s}_1,\hat{s}_2$, having $\hat{s}_2 \ins
  \Phi_B(\hat{s}_1)$ means that $\hat{s}_2$ satisfies a goal condition
  that is determined at $\hat{s}_1$, and is reachable via executing
  some plan $\sigma$. With the assumption that \eqref{eqn:cond} holds,
  we can apply backwards tracking from any state $s_2 \in
  \hat{s}_2$ following the transitions $\Phi$ corresponding to the
  actions in the plan $\sigma$ backwards. In the end, we can find a
  concrete state $s_1 \ins \hat{s}_1$ from which one can reach
  the state $s_2 \ins \hat{s}_2$ by applying the plan $\sigma$
  in the original transition system.
\end{proof}

Thus, we can conclude the following corollary, with the requirement of only having initial states clustered into the equalized initial states (i.e. no ``non-initial" state is mapped to an initial equalized state). Technically, it should hold that $\forall s \in S_0: h^{-1}(h(s)) \subseteq S_0$.  

\begin{cor}
If there is a trajectory in the equalized transition system with initial state clustering from an equalized initial state $\hat{s}_0$ to $g_\infty$, then it is possible to find a trajectory in the original transition system from some concrete initial state $s_0 \in \hat{s}_0$ to $g_\infty$.
\end{cor}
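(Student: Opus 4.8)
The plan is to prove the Corollary by backward induction along the equalized trajectory, invoking the soundness Proposition at each step to lift a single equalized transition into a concrete trajectory, and then using the initial-state clustering hypothesis to guarantee that the construction terminates at a genuine concrete initial state.

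First I would unfold what an equalized trajectory to $g_\infty$ is: a sequence $\hat{s}_0,\hat{s}_1,\dots,\hat{s}_k$ with $\hat{s}_0 \in \widehat{S}_0$, $\hat{s}_{i+1} \in \Phi_B(\hat{s}_i)$ for all $0 \leq i < k$, and $\hat{s}_k \models g_\infty$. By the definition of reachability, every $\hat{s}_i$ along this trajectory lies in $\mathcal{R}^\infty$, so the reachability premise of the soundness Proposition is satisfied at each index. Since $\hat{s}_k \models g_\infty$ unfolds, via $\hat{s} \models g_B \Leftrightarrow \forall s \ins \hat{s}: s \models g_B$, to the statement that \emph{every} concrete state of $\hat{s}_k$ satisfies $g_\infty$, I may fix an arbitrary witness $s_k \ins \hat{s}_k$ and record that $s_k \models g_\infty$. (If $k=0$ the argument degenerates to this single state, and the concrete trajectory is empty.)

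Next I would run the backward induction. Starting from $s_k$ and using $\hat{s}_k \ins \Phi_B(\hat{s}_{k-1})$, the soundness Proposition yields a concrete $s_{k-1} \ins \hat{s}_{k-1}$ and an action sequence $\sigma_{k-1}$ with $s_{k-1} \rightarrow^{\sigma_{k-1}} s_k$ in the original system. Iterating down the indices produces concrete states $s_{k-1},s_{k-2},\dots,s_0$ with $s_i \ins \hat{s}_i$ and $s_i \rightarrow^{\sigma_i} s_{i+1}$ for each $i$. Concatenating the pieces as $\sigma = \sigma_0\sigma_1\cdots\sigma_{k-1}$ gives a single trajectory $s_0 \rightarrow^\sigma s_k$ in the original transition system that ends in a state satisfying $g_\infty$.

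The one remaining, and genuinely necessary, step is to argue that $s_0$ is an \emph{initial} state, not merely a member of the equalized cluster $\hat{s}_0$; this is exactly where the initial-state clustering hypothesis is used, and I expect it to be the only delicate point. The soundness Proposition guarantees $s_0 \ins \hat{s}_0$ but is silent about initiality. However, $\hat{s}_0 \in \widehat{S}_0$ means $\hat{s}_0$ contains at least one $s^* \in S_0$; by the clustering condition $h^{-1}(h(s^*)) \subseteq S_0$, and since $\hat{s}_0$ is identified with the pre-image $h^{-1}(h(s^*))$, every element of $\hat{s}_0$ lies in $S_0$. In particular $s_0 \in S_0$, so $s_0$ is a concrete initial state and $s_0 \rightarrow^\sigma s_k$ is the desired trajectory. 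Without the clustering condition the backward construction could land on a non-initial state of $\hat{s}_0$, so this assumption cannot be dropped.
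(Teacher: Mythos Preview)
Your proposal is correct and is exactly the argument the paper has in mind: the paper does not spell out a proof of the corollary at all, but simply states that it follows from the soundness Proposition together with the initial-state clustering requirement. Your backward induction along the equalized trajectory, invoking the Proposition at each step and then closing with the clustering condition to force $s_0\in S_0$, is precisely the natural way to make that one-line claim rigorous.
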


We want to be able to study the reactive policy through the equalized
transition system. In case the policy does not work as expected, there
should be trajectories that shows the reason of the failure. Knowing
that any such trajectory found in the equalized transition system
exists in the original transition system is enough to conclude that
the policy indeed does not work.

Current assumptions can not avoid the case where a plan $\sigma$ returned by $\Reach$ on the equalized transition system does not have a corresponding trajectory in the original transition system. Therefore, we consider an additional condition as
\beq
\hat{s}' \in \hat{\Phi}(\hat{s},a) \Leftrightarrow \forall s\in
\hat{s},~ \exists s'\in \hat{s}': s' \in \Phi(s,a) 
\eeq {eqn:cond2}
that strengthens the properness condition \eqref{eqn:cond}. Under this condition, every plan
returned by $\Reach$ can be successfully executed in the original
transition system and will establish the target $g_B$. However, still we
may lose trajectories of the original system as by clustering states 
they might not turn into conformant plans. Then one would need to modify the description of determining targets, i.e. the set of targets $G_B$ and the function $\mathcal{B}$.
\begin{exmp}

\begin{figure}[h!]
\centering
\subfigure[][Successor states from $\Phi_B(\hat{s}_1)$\label{fig:flow}]{\includegraphics[scale=0.8]{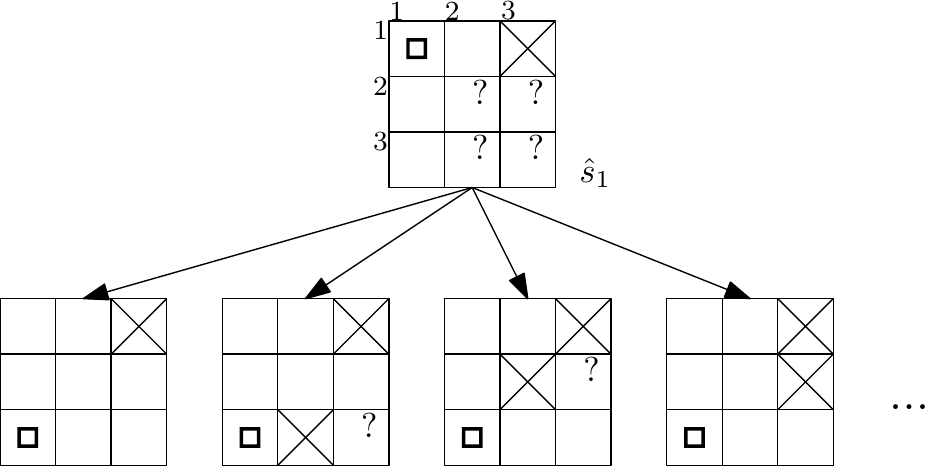}}
\subfigure[][Disregarding irrelevant states\label{fig:redundant}]{\includegraphics[scale=0.8]{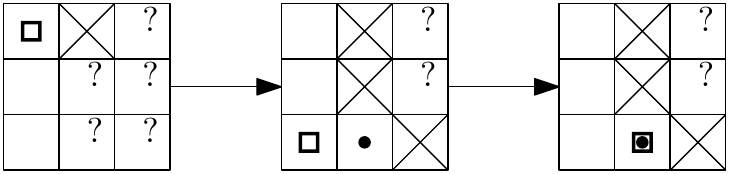}}
\caption{Parts of an equalized transition system}
\end{figure}

Remember the environment and the policy described in the running example, and consider the scenario shown in Figure~\ref{fig:flow}. It shows a part of the equalized transition system constructed according to the policy. 
The states that are not distinguishable due to the partial observability are clustered into the same state.

The policy is applied according to current observations, and the successor states show the possible resulting states. The aim of the policy is to have the agent move to the farthest reachable point, which for $\hat{s}_1$ is $(3,1)$. As expected, there can be several states that satisfy the target $g_B{=}robotAt(3,1)$. The successor states of $\Phi_B(\hat{s}_1)$ is determined by $\Res(\hat{s}_1,\sigma)$ computing the possible resulting states after executing the plan $\sigma$ returned by $\Reach(\hat{s}_1,g_B)$. Considering that the agent will gain knowledge about the environment while moving, there are several possibilities for the resulting state.

Notice that this notion of a transition system can help in reducing the number of states, due to the fact that it is able to disregard states with information on fluents that does not have any effect on the system's behavior. For example, Figure~\ref{fig:redundant} shows a case where the unknown parts behind the obstacles are not relevant to the agent's behavior, i.e. the person can be found nonetheless.
\label{ex:2}
\end{exmp}

\section{Relation with Action Languages}

In this section, we describe how our definition of a higher-level transition system that models the behavior can fit into the action languages. Given a program defined by an action language and its respective (original) transition system, we now describe how to model this program following a reactive policy and how to construct the corresponding equalized transition system according to the policy.

\subsection{Classifying the state space} 

The approach to classify the (original) state space relies on defining a function that classifies the states. There are at least two kinds of such classification; one can classify the states depending on whether they give the same values for certain fluents and omit the knowledge of the values of the remaining fluents, or one can introduce a new set of fluents and classify the states depending on whether they give the same values for the new fluents:
\bi
\item Type 1: Extend the set of truth values by ${\bf V}'={\bf V} \cup \{\mathsf{u}\}$, where $\mathsf{u}$ denotes the value to be \emph{unknown}. Extend the value function by $V': {\bf F} \times S \rightarrow \textbf{V}'$. Then, consider a new set of groups of states, $\widehat{S}=\{\hat{s}_1,\dots,\hat{s}_n\}$, where a group state $\hat{s}_i$ contains all the states $s\in S$ that give the same values for all $p \in \textbf{F}$, i.e. $\widehat{S} = \{\hat{s} ~|~ \forall d,e \in S,~ d,e \in \hat{s} {\iff}\forall p\in \textbf{F}: V'(p,d){=}V'(p,e)~ \}$. The value function for the new group of states is $\widehat{V}:\textbf{F} \times \widehat{S} \rightarrow \textbf{V}'$.

\item Type 2: Consider a new set of (auxiliary) fluent names $\textbf{F}_a$, where each fluent $p \in \textbf{F}_a$ is \emph{related} with some fluents of $\textbf{F}$. The relation can be shown with a mapping $\Delta:2^{\textbf{F} \times \textbf{V}} \rightarrow \textbf{F}_a \times \textbf{V}$. 
Then, consider a new set of groups of states, $\widehat{S}=\{\hat{s}_1,\dots,\hat{s}_n\}$, where a group state $\hat{s}_i$ contains all the states $s\in S$ that give the same values for all $p \in \textbf{F}_{a}$, i.e. $\widehat{S} = \{\hat{s} ~|~ \forall d,e \in S,~ d,e \in \hat{s} {\iff}\forall p\in \textbf{F}_a: V(p,d){=}V(p,e)~ \}$. The value function for the new group of states is $\widehat{V} : \textbf{F}_a \times \widehat{S} \rightarrow \textbf{V}$.
\ei

We can consider the states in the same classification to have the same \emph{profile}, and the classification function $h$ as a membership function that assigns the states into groups. 

\paragraph{Remarks:} (1) In Type 1, introducing the value \emph{unknown} for the fluents allows for describing sensing actions and knowing the true value of a fluent at a later state. Also, one needs to give constraints for a fluent to have the \emph{unknown} value. e.g. it can't be the case that a fluent related to a grid cell is unknown while the robot is able to observe it.\\
(2) In Type 2, one needs to modify the action descriptions according to the newly defined fluents and define \emph{abstract actions}. However, in Type 1, the modification of the action definitions is not necessary, assuming that the actions are defined in a way that the fluents that are used when determining an action always have \emph{known} values.

Once a set of equalized states is constructed according to the classification function, one needs to define the reactive policy to determine the transitions. Next, we describe how a policy can be defined from an abstract point of view, through a \emph{target language} which figures out the targets and helps in determining the course of actions, and show how the transitions are constructed.

\subsection{Defining a target language} 

Let $\widehat{\textbf{F}}$ denote the set of fluents that the equalized transition system is built upon. Let $\mathcal{F}(\widehat{\textbf{F}})$ denote the set of formulas in an abstract language that can be constructed over 
$\widehat{\textbf{F}}$.

We consider a declarative way of finding targets. 
Let $\mathcal{F}_\mathcal{B}(\widehat{\textbf{F}}){\subseteq} \mathcal{F}(\widehat{\textbf{F}})$ be the set of formulas that describe target determination. Let $\mathcal{F}_{G_B}(\widehat{\textbf{F}}) {\subseteq} \mathcal{F}(\widehat{\textbf{F}})$ denote the set of possible targets that can be determined via the evaluation of the formulas $\mathcal{F}_\mathcal{B}(\widehat{\textbf{F}})$ over the related fluents in the equalized states. 

Notice that separation of the target determining formulas $\mathcal{F}_\mathcal{B}(\widehat{\textbf{F}})$ and the targets $\mathcal{F}_{G_B}(\widehat{\textbf{F}})$ is to allow for outsourced planners that are able to understand simple target formulas. These planners do not need to know about the target language in order to find plans. However, if one is able to use planners that are powerful enough, then the target language can be given as input to the planner, so that the planner determines the target and finds the corresponding plan.

To define a relation between $\mathcal{F}_\mathcal{B}(\widehat{\textbf{F}})$ and $\mathcal{F}_{G_B}(\hatbf{F})$, we introduce some placeholder fluents. Let $\mathcal{F}_\mathcal{B}(\hatbf{F})=\{f_1,\dots,f_n\}$ be the set of target formulas. Consider a new set of fluents $\hatbf{F}_\mathcal{B}=\{p_{f_1},\dots,p_{f_n}\}$ where each of the formulas in $\mathcal{F}_\mathcal{B}$ is represented by some fluent. The value of a fluent depends on whether its respective formula is satisfied or not, i.e. for a state $s$, $s \models f {\iff} V(p_f,s)=\textsf{t}$. Now consider a mapping $\mathcal{M}: 2^{\hatbf{F}_\mathcal{B}} \rightarrow 2^{\mathcal{F}_{G_B}(\hatbf{F})}$ where 
\beeq
\mathcal{M}(\{p_{f_1},p_{f_2},\dots,p_{f_m}\})=\{g_1,\dots,g_r\}, m \leqs n \hbox{ and } r\geqs 1
\eeeq
means that if there is a state $s$ such that $s {\models} f_i$, $1 \leqs i \leqs m$ and $s \nmodels f$ for the remaining formulas $f\ins \mathcal{F}_\mathcal{B}(\widehat{\textbf{F}})\backslash\{f_1,\dots,f_m\}$, then in the successor state $s'$ of $s$, $s' \models g_i$ for some $ 1 \leqs i \leqs r$, should hold. We consider the output of $M$ to be a set of targets in order to represent the possibility of nondeterminism in choosing a target.

\subsection{Transition between states} 

The transition for the equalized transition system can be denoted with $\widehat{R} {\subseteq} \widehat{S}{\times} \widehat{S}$, where $\widehat{R}$ corresponds to the policy execution function $\Phi_B$ that uses (a) the target language to determine targets, (b) an outsourced planner (corresponding to the function $\Reach$) to find conformant plans and (c) the computation of executing the plans (corresponding to the function $\Res$). The outsourced planner finds a sequence of actions $\sigma \in 2^\mathcal{A}$ from an equalized state $\hat{s}$ to one of its determined targets $g_B$. Then the successor equalized states are computed by executing the plan from $\hat{s}$. Transition $\widehat{R}$ shows the resulting states after applying the policy.

\begin{exmp}
Let us consider a simple blocksworld example 
where a policy (of two phases) is defined as follows:
\bi
\item if at phase 1 and not all the blocks are on the table, move one free block on a stack with highest number of blocks to the table.
\item if all the blocks are on the table, move to phase 2.
\item if at phase 2 and not all the blocks are on top of each other, move one of the free blocks on the table on top of the stack with more than one block (if exists any, otherwise move the block on top of some block).
\ei

Since the policy does not take blocks' labels into consideration, a classification can be of the following form for $n$ number of blocks: We introduce an $n$-tuple $\langle b_1,\dots,b_n \rangle$ to denote equalized states such that for $i \leq n$, $b_i$ would represent the number of stacks that have $i$ blocks. For example, for $4$ blocks, a state $\langle 1,0,1,0\rangle$ where $b_1=1,b_2=0,b_3=1,b_4=0$ would represent all the states in the original transition system with the profile ``contains a stack of 1 block and a stack of 3 blocks". Notice that in the original transition system for 4 labeled blocks, there are 24 possible states that have this profile and if the blocks need to be in order, then there are 4 possible states.

\begin{figure}[t]
\centering
\includegraphics[scale=0.8]{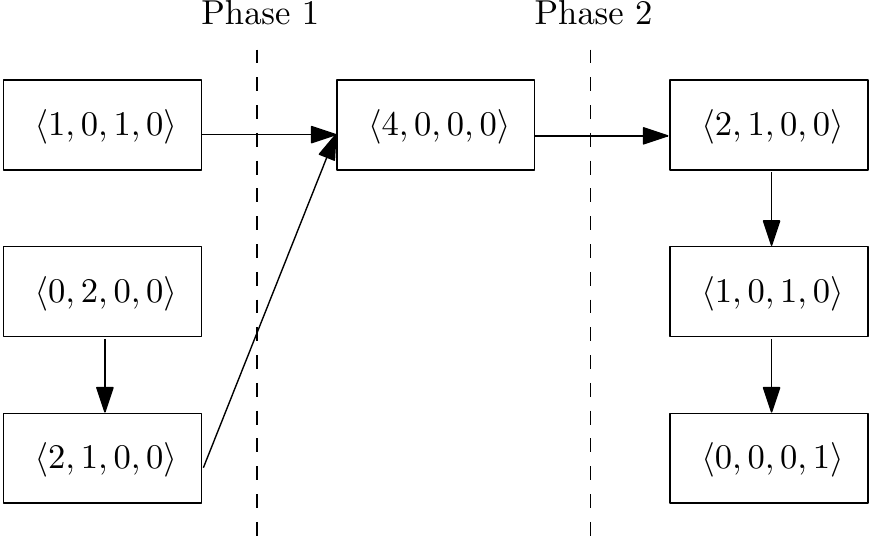}
\caption{Eq. transition system of blocksworld ($n\eqs4$)}
\label{fig:blocks}
\end{figure}

Figure~\ref{fig:blocks} demonstrates the corresponding equalized transition system for the case of $4$ blocks. The equalized transition system for this example is in the following form:
\bi
\item $\widehat{S}$ is the set of equalized states according to the abstraction as described above.
\item $\widehat{S}_0 \in \widehat{S}$ is the initial equalized states (all elements of $\widehat{S}$ except $\langle 0,\dots,0,1\rangle$).
\item $G_B=\widehat{S}$, since the policy is related with all the blocks, it can determine targets as the whole states.
\item $\mathcal{B}: \widehat{S} \rightarrow \widehat{S}$ is the target function.
\item $\Phi_B: \widehat{S} \rightarrow \widehat{S}$ is the policy execution function, returning the resulting successor state after applying one action desired by the behavior, shown as in Figure~\ref{fig:blocks}.
\ei

\label{ex:blocks}
\end{exmp}

\section{Application on Action Language $\mathcal{C}$}

In this section, we describe how one can construct an equalized transition system for a reactive system that is represented using the action language $\mathcal{C}$ \cite{c98}. First, we give some background information about the language $\mathcal{C}$, then move on to the application of our definitions.

\paragraph{Syntax} A \emph{formula} is a propositional combination of fluents.

Given a propositional action signature $\langle \{\mathsf{f,t}\},\textbf{F},\textbf{E}\}$,
whose set \textbf{E} of elementary action names is disjoint from \textbf{F},
\noindent an \emph{action description} is a set of expressions of the following forms:
\bi
\item \emph{static laws:}
\beq
\textbf{caused } F \textbf{ if } G,
\eeq {eqn:c_static}
where $F$ and $G$ are formulas that do not contain elementary actions;
\item \emph{dynamic laws:}
\beq
\textbf{caused } F \textbf{ if } G \textbf{ after } U,
\eeq {eqn:c_dynamic}
where $F$ and $G$ are as above, and $U$ is a formula.
\ei

\paragraph{Semantics} The transition system $\langle S,V,R \rangle$ \emph{described} by an action description $D$ is defined as follows:
\be [(i)]
\item $S$ is the set of all interpretations $s$ of {\bf F} such that, for every static law \eqref{eqn:c_static} $s$ satisfies $F$ if $s$ satisfies $G$,
\item $V(P,s)=s(P),$ i.e. identify $s$ with $V(P,s),$
\item $R$ is the set of all triples $\langle s,A,s' \rangle$, $A \subseteq \textbf{E}$, such that $s'$ is the only interpretation of {\bf F} which satisfies the heads of all
\bi
\item static laws \eqref{eqn:c_static} in $D$ for which $s'$ satisfies $G$, and
\item dynamic laws \eqref{eqn:c_dynamic} in $D$ for which $s'$ satisfies $G$ and $s \cup A$ satisfies $U$.
\ei
\ee

We focus on a fragment of the language $\mathcal{C}$ where the heads of the static and dynamic laws only consist of literals. This restriction on the laws reduces the cost of evaluating the transitions $\langle s, A, s' \rangle \ins R$ to polynomial time. Thus, we match the
   conditions on complexity from above. Furthermore, by well-known
   results on the complexity of action language ${\cal C}$ \cite{turner2002polynomial,DBLP:journals/tocl/EiterFLPP04}  all the results in Theorems
   1-5 can be turned into completeness results already for this fragment.

\subsection{Defining a policy}

Let $\hatbf{F}$ be the set of fluents that are relevant to the policy. The target language is defined explicitly via static laws using the fluents in $\hatbf{F}$, denoted $\mathcal{F}_{\mathcal{B}}(\hatbf{F})$, where a target is determined by the evaluation of these formulas in a state. 

\begin{exmp}
An example of a target language for the running example uses causal laws from $\mathcal{C}$:
\beeq \ba l
\caused{target(X1,Y1)}{}\\ \hfill \mi{robotAt(X,Y) ~\wedge~ farthest(X,Y,X1,Y1)} \\ \hfill \mi{~\wedge~ not~ personDetected.}\\
\caused{personDetected}{personDetected(X,Y).}\\
\caused{targetPerson(X,Y)}{personDetected(X,Y).}\\
\caused{personFound}{personDetected(X,Y)}\\ \hfill \mi{~\wedge~ robotAt(X,Y).}
\ea \eeeq 
where $\mathcal{F}_{G_B}(\hatbf{F})$ consists of all atoms
$\mi{target(X,Y)}$ and $\mi{targetPerson(X,Y)}$ for $1\leqs \mi{X} \leqs n, 1\leqs \mi{Y} \leqs n$.

The target of a state according to the
policy is computed through joint evaluation of these 
causal laws over the state with the \emph{known} fluents about the agent's location and the reachable points. Then, the outsourced planner may take as input the agent's current location and the target location, to find a plan to reach the target.
\end{exmp}

\subsubsection{Equalized transition system} The equalized transition system $\langle \widehat{S},\widehat{V},\widehat{R} \rangle$ that describes the policy is defined as follows:
\be [(i)]
\item $\widehat{S}$ is the set of all interpretations of $\hatbf{F}$ such that, for every static law \eqref{eqn:c_static} $\hat{s}$ satisfies $F$ if $\hat{s}$ satisfies $G$,
\item $\widehat{V}(P,\hat{s})=\hat{s}(P)$, where $P \in \hatbf{F}$,
\item $\widehat{R} \subseteq \widehat{S} \times \widehat{S}$ is the set of all $\langle \hat{s}, \hat{s}' \rangle$ such that
\be
\item for every $s'\in \hat{s}'$ there is a trajectory from some $s \in \hat{s}$ of the form $s,A_1,s_1,\dots,A_n,s'$ in the original transition system;
\item for static laws $f_1,f_2,\dots,f_m \in \mathcal{F}_\mathcal{B}(\hatbf{F})$ for which $\hat{s}$ satisfies the body, it holds that $\hat{s}' \models g$ for some $g\in \mathcal{M}(p_{f_1},\dots,p_{f_m}).$
\ee
\ee

Notice that in the definition of the transition relation $\widehat{R}$ in (iii) there is no description of (a) how a trajectory is computed or (b) how a target is determined. This gives flexibility on the implementation of these components. 

Other languages can be similarly used to describe the equalized transition system, as long as they are powerful enough to express the concepts from the previous section.

\section{Related Work}

There are works being conducted on the verification of GOLOG programs \cite{golog97}, a family of high-level action programming languages defined on top of action theories expressed in the situation calculus. 
The method of verifying properties of non-terminal processes are sound, but do not have the guarantee of termination due to the verification problem being undecidable \cite{giacomo97,classen08}. By resorting to action formalisms based on description logic, decidability can be achieved \cite{verifygolog13}. 

Verifying temporal properties of dynamic systems in the context of data management is studied by \cite{calvanese2013verification} in the presence of description logic knowledge bases. However, target establishment and planning components are not considered in these works, and they do not address real life environment settings.

The logical framework for agent theory developed by Rao and Georgeff \shortcite{bdi91} is based on beliefs, desires and intentions, in which agents
are viewed as being rational and acting in accordance with their beliefs and goals. 
There are many different agent programming languages and platforms based on the BDI approach. Some works carried out on verifying properties of agents represented in these languages, such as \cite{bordini06,dennis12}. These approaches consider very complex architectures that even contain a plan library where plans are matched with the intentions or the agent's state and manipulate the intentions.

\paragraph{Synthesizing and Verifying Plans} 

There have been various works on synthesizing plans via symbolic model checking techniques by Cimatti et al.\ \shortcite{cimattistrong98,cimattiobdd98}, Bertoli et al.\ \shortcite{bertoli06}.  
These approaches are able to solve difficult planning problems 
like strong planning and strong cyclic planning. 

Son and Baral \shortcite{baral01} extend the action description language by allowing sensing actions and allow to query conditional plans.
These conditional plans are general plans that consist of sensing actions and conditional statements. 

These works address a different problem then ours. When nondeterminism and partial observability are taken into account, finding a plan that satisfies the desired results in the environment is highly demanding. We consider a much less ambitious approach where given a behavior, we aim to check whether or not this behavior gives the desired results in the environment. However, our framework is capable emulating the plans found by these works.

\paragraph{Execution Monitoring} There are logic-based monitoring frameworks that monitor the plan execution and recover the plans in case of failure. The approaches that are studied are replanning \cite{de1998execution}, backtracking to the point of failure and continuing from there \cite{soutchanski2003high}, or diagnosing the failure and recovering from the failure situation \cite{fichtner2003intelligent,eiter2007logic}.

These works consider the execution of a given plan, while we consider a given reactive policy that determines targets and use (online) planning to reach these targets.

\section{Conclusion and Future Work}

In this paper, we described a high-level representation that models
reactive behaviors, and integrates target development and online
planning capabilities. Flexibility in these components does not bound
one to only use action languages, but allows for the use of other
formalizations as well. 
For future work, one could
imagine targets to depend on further parameters or to incorporate learning from experience in the framework. It is also possible to use other plans, e.g. short conditional plans, in the planner component.

The long-term goal of this work is to check and verify properties of
the reactive policies for action languages. In order to solve these problems practically,
it is necessary to use techniques from model checking, such as
abstraction, compositional reasoning and parameterization. Also, the use of temporal logic formulas is needed to express complex goals such as properties of the policies. Our main target is to work with action languages, and to incorporate their syntax and semantics with such model checking techniques. The general structure of our framework allows one to focus on action languages, and to investigate how to merge these techniques.

\bibliography{ref.bib}
\bibliographystyle{aaai}

\end{document}